\definecolor{bleudefrance}{rgb}{0.19, 0.55, 0.91}
\definecolor{ao(english)}{rgb}{0.0, 0.5, 0.0}
\newcommand{\addcite}[0]{\ifthenelse{\boolean{showcomments}}
{\textcolor{purple}{(add cite(s)) }}{}}%
\newcommand{\enrique}[1]{  \ifthenelse{\boolean{showcomments}}
{\todo[inline,color=bleudefrance]{Enrique: #1}}{}}
\newcommand{\emmargin}[1]{\ifthenelse{\boolean{showcomments}}{\marginpar{\color{bleudefrance}\tiny EM: #1}}{}}
\newcommand{\aem}[1]{
\ifthenelse{\boolean{showedits}}
{\added[id=EM]{#1}}
{\!#1\hspace{-4.75pt}}
}
\newcommand{\repem}[2]{
\ifthenelse{\boolean{showedits}}
{\replaced[id=EM]{#1}{#2}}
{\!#1\hspace{-4.75pt}}
}
\newcommand{\dem}[1]{
\ifthenelse{\boolean{showedits}}
{\deleted[id=EM]{#1}}
{}
}
\newtheorem{lemma}{Lemma}
\newtheorem{theorem}{Theorem}
\theoremstyle{definition}
\newtheorem*{remark}{Remark}
\DeclareMathOperator{\handicap}{Handicap}
\title{\LARGE \bf
Learning to be safe, in finite time
}
\author{Agustin Castellano,$^1$ Juan Bazerque,$^1$ and Enrique Mallada$^2$
\thanks{$^{1}$Agustin Castellano and Juan Bazerque are with the Universidad de la Republica, Montevideo, Uruguay.
        \texttt{\{acastellano, jbazerque\}@fing.edu.uy}.
$^{2}$Enrique Mallada is with the Johns Hopkins University, Baltimore, Maryland, USA.     \texttt{mallada@jhu.edu}
The work was supported by NSF through grants CNS 1544771, EPCN 1711188, AMPS 1736448 and CAREER 1752362, and by ANII-Uruguay through grant FSE-1-2019-1-159457.}
}
\begin{document}

\maketitle
\thispagestyle{empty}
\pagestyle{empty}

\begin{abstract}
This paper aims to put forward the concept that learning to take safe actions in unknown environments, even with probability one guarantees, can be achieved without the need for an unbounded number of exploratory trials, provided that one is willing to relax its optimality requirements mildly. We focus on the canonical multi-armed bandit problem and seek to study the exploration-preservation trade-off intrinsic within safe learning. More precisely, by defining a 
handicap metric that counts the number of unsafe actions, we provide an algorithm for discarding unsafe machines (or actions), with probability one, that achieves constant handicap.
Our algorithm is rooted in the classical sequential probability ratio test, redefined here for continuing tasks.  Under standard assumptions on sufficient exploration, our rule provably detects all unsafe machines in an (expected) finite number of rounds. The analysis also unveils a trade-off between the number of rounds needed to secure the environment and the probability of discarding safe machines. Our decision rule can wrap around any other algorithm to optimize a specific auxiliary goal since it provides a safe environment to search for (approximately) optimal policies. Simulations corroborate our theoretical findings and further illustrate the aforementioned trade-offs.

\end{abstract}

\section{Introduction}

Learning to take \emph{safe actions} in unknown environments is a general goal that spans across multiple disciplines. Within control theory safety is intrinsic to robust analysis and design~\cite{zhou1996robust}, where controllers, with stability and performance guarantees,  are designed for uncertain systems. It is also at the core of statistical decision theory~\cite{berger2013statistical}, where the inference and the decision-making processes are intertwined towards the common goal of making accurate decisions based on limited information. Though, historically, these two disciplines have been deemed as seemingly disconnected, such separation is rapidly vanishing.


Motivated by the success of machine learning in achieving super human performance, e.g., in vision~\cite{krizhevsky2012imagenet,rawat2017deep}, speech~\cite{hinton2012deep,graves2013speech}, and video games~\cite{silver2016mastering,vinyals2017starcraft}, there has been recent interest in developing learning-enabled technology that can implement highly complex actions for safety-critical autonomous systems, such as self-driving cars, robots, etc. However, without proper safety guarantees such systems will rarely be deployed. There is therefore the need to develop analysis tools and algorithms that can provide such guarantees during, and after, training.

A common approach to solve this problem is to search for actions, policies, or controllers that optimize a cost or reward subject to safety requirements imposed as constraints. Examples include, adding safe constraints for reinforcement learning algorithms~\cite{2006.Geibel,2019.Zanon,2019.Paternainwed,2019.Chengfp9,2017.Achiam}, (robust) stability constraints to  learning algorithms~\cite{2020.Dean,Dean,2019.Fazlyab9wc}, and solving constrained multi-armed bandits~\cite{2020.Moradipari,2019.Amani}. When such constraints are being included during training, algorithms that converge to optimal policies,  guarantee safety asymptotically. However, such an approach  fails to provide guarantees while learning.

In this paper, we suggest an alternative approach. Instead of focusing on finding optimal actions subject to, a priori unknown, safety constrains, we argue that one should tackle the problem of learning safe actions separately and more efficiently. To illustrate this point, we study the problem of finding safe actions within the canonical setting of the multi-armed bandit (MAB) problem. This setting lacks state transitions and is thus an abridged version of learning in dynamical environments. In the MAB setting, one  is given a set of $N$ machines/actions with expected reward $\mu_n\in[0,1]$ for $n\in\{1,\dots,N\}$. 
By letting safe actions to be those who choose machines with reward larger than some nominal $\mu$, we define a \emph{handicap} metric ---akin to regret--- that counts the number of times unsafe actions are chosen. 
Leveraging classical results on sequential hypothesis tests \cite{wald1945}, we provide an algorithm for detecting unsafe machines. Unlike the regret minimization counterpart of this problem, which requires an unbounded (with logarithmic growth) number of trials  of sub-optimal actions to identify the machine with highest reward~\cite{1985.Lai}, our algorithm  discards all unsafe machines with only {a finite number of trials of unsafe actions. By characterizing such number, we  guarantee that the  total handicap remains bounded by a constant.} 

More precisely, we use a modified version of Wald's sequential probability ratio test (SPRT)~\cite{wald1945} for each machine, with the null ($H_0$) and alternative ($H_1$) hypothesis  being that the machine is safe and unsafe, respectively. {Intentionally different than the SPRT, which  aims to decide  either $H_0$ or $H_1$  and then stop,  our goal is to discard  unsafe machines only by deciding on $H_1$.  }
This allows our algorithm to \emph{identify all unsafe machines with probability one} within a finite expected number of trials.  Our analysis also unveils an \emph{exploration-preservation trade-off} between the false-positive ratio (safe machines discarded) and the total handicap experienced (number of trials  on unsafe machines).
Notably, our decision rule can further wrap around any other algorithm to optimize a specific auxiliary goal since it provides a safe environment to search for (approximately) optimal policies.


The rest of the paper is organized as follows. We introduce our problem setup in Section \ref{sec:problem_Statement}. For didactic purposes, we first look at the case where we aim to find machines with $\mu=1$ in  Section \ref{sec:flawless}, construct a sequential test that extends this case for one machine in Section \ref{sec:sprt}, and generalize the solution in Section \ref{sec:relaxed}. Numerical illustrations are provided in Section \ref{sec:numerical} and we conclude in Section \ref{sec:conclusions}.

\section{Problem statement}\label{sec:problem_Statement}
Consider the  setup  of a multi-armed bandit problem, in which at each time instant $t=1,2,\ldots$  we have the choice to operate one out of  $N$ machines. If machine $n\in \{1,2,\ldots,N\}$ is operated at time $t$, it returns a binary value $X_{n,t}$ which is modelled as a   Bernoulli random variable with parameter $\mu_n$. This return reveals whether the  action  led to a safe result in which case  $X_{n,t}=1$, or an unsafe one if $X_{n,t}=0$. A machine is said to be safe if its operation leads to a safe result. In this sense we consider two cases, one in which we only accept flawless machines, i.e., those with $\mu_n=1$, and a relaxed condition in which a machine is defined to be safe if  $\mu_n\geq \mu$, with $\mu\in(0,1)$ being a  prescribed safety requirement (possibly $\mu\simeq 1$). 

Let $I_t\in \{1,2,\ldots,N\}$ denote the index of the machine selected at time $t$, and $X_t\doteq X_{I_t,t}$ the corresponding return. Our goal is to design a selection policy and a decision  rule that uses data $X_t,\ t=1,2,\ldots$ to remove all unsafe machines, while guaranteeing that a prescribed proportion of the safe machines are kept. 
If only flawless machines  are accepted then the solution is straightforward: the algorithm should  remove  machines as soon as they return their first $X_{t}=0$. We will analyse this case first in Section \ref{sec:flawless}.

For the relaxed condition, we will develop a one-sided Sequential Probability Ratio Test (SPRT) that  removes unsafe machines with $\mu_n\leq \mu$  almost surely.  In order to guarantee  that unsafe machines are removed in finite time, and provide an explicit bound on the expected number of trials needed, we need to sacrifice a proportion of the safe machines. For this purpose we prescribe a slack parameter $\epsilon$ and a probability $\alpha$, and show that a proportion $1-\alpha$ of those safe machines with $\mu_n\geq \mu+\epsilon$ are kept as $t\to\infty$. 
We will develop this modified version of the SPRT in Section \ref{sec:sprt} for the case of $N=1$, extending it in Section \ref{sec:relaxed} to the multi-armed bandit setup.

Along the way, we will introduce three figures of merits that are instrumental to goal of learning to be safe. One is the \emph{handicap}, that complements the idea of regret for operating unsafe machines, and counts the number of unsafe actions chosen so far.  {Closely related to the notion of handicap is the testing time, that counts the number of times a machine is tried for safety, and  is related to the detection time of unsafe machines.} The third one is the safety ratio, which counts the proportion of safe machines that are kept at time $t$. 

\section{Safe learning with flawless machines}\label{sec:flawless}
Consider the multi armed bandit setup described above with $N$ machines, $M$ of them unsafe or \emph{malfunctioning}. In order to simplify notation and without loss of generality, we assume that the first $M$ machines are unsafe so that $\mu_n<1$  for $n=1,\ldots,M$, and $\mu_n=1$ for $n=M+1,\ldots,N$.

We are assured that $X_{n,t}=1~\forall t$ if the machine is safe, thus we can discard those machines that return $X_{n,t}=0$. This is the strategy in Algorithm \ref{alg:assured_explorer}, which selects actions at random over the set $\mathcal S_t$ of  machines that remain at time $t$. 
\begin{algorithm}
Initialize $\mathcal{S}_1=\left\{1,\ldots,N\right\}$\\
\For{$t=1,2,\ldots,$}{
Pick an arm $I_t \sim$ Unif($\mathcal{S}_t$)\\
Observe return $X_t=X_{I_t,t}$\\
\If{$X_t=0$}{$\mathcal{S}_t \leftarrow \mathcal{S}_{t-1}\setminus \{I_t\}$
}
\Else{$\mathcal{S}_t\leftarrow\mathcal{S}_{t-1}\;$}}\caption{Safety Inspector}
\label{alg:assured_explorer}
\end{algorithm}

The following definitions are introduced for the purpose of analysing  the Safety Inspector Algorithm \ref{alg:assured_explorer} and its relaxed version in  Section \ref{sec:relaxed}. 
First, even if we recognize that detecting  unsafe machines requires unsafe actions to be taken,  we want to measure if our algorithms pick those unsafe machines efficiently. For this purpose we present the notion of \emph{handicap}, defined as the number of times an unsafe machine is selected, i.e.,
\begin{equation}
    \handicap_t=t-\sum_{\tau=1}^t \mathds{1}\{\mu_{I_\tau}\geq\mu\}\label{eq:handicap}
\end{equation}
where $\mu=1$ in this section, and $\mathds 1(\cdot)$ represents the indicator function which returns one or zero when its argument is true or false, respectively.

{
\begin{remark}
We use the word \emph{handicap} in the sense of a measure of \emph{``a disadvantage that makes achievement unusually difficult,''} as its definition suggests~\cite{mw:handicap}.
An algorithm with unbounded $\handicap_t$ takes unsafe actions infinitely often and is  prone to malfunctioning.
This marks a stark contrast with the notion of \emph{regret}, typically studied in Bandit settings~\cite{lattimore2020bandit}, where unbounded regret is unavoidable~\cite{1985.Lai}.
\end{remark}
}

Notice that if machine  $n$ is selected at time $\tau$, then  $\mu_{I_\tau}=\mu_n$. Thus, the indicator function will return $1$ at time $\tau$ only when a flawless machine is selected. Furthermore,  even if $\mu_n$ is deterministic, $\handicap_t$ is still a random variable, with  randomness coming from the selection $I_t$.
Even if $I_t$ is selected in  round-robin instead of uniformly as in Algorithm \ref{alg:assured_explorer}, the set $\mathcal{S}_t$ is conditioned on previous instances of $X_t$, which are uncertain. In light of this, it is noticeable that an algorithm with low handicap is one which selects unsafe machines infrequently. 
As a second figure of merit we define the safety ratio $\rho_t$ as the proportion of safe machines that are kept after $t$ time slots, i.e.,
\begin{equation}
\rho_t=\frac{\sum_{n\in\mathcal S_t } \mathds 1(\mu_n\geq \mu+\epsilon)}{N-M}\label{eq:safety_ratio}
\end{equation}
with $\epsilon=0$ for Algorithm \ref{alg:assured_explorer}.
Together with the notions of handicap and safety ratio, we are interested in analysing the time that elapses until an unsafe machine is removed.  For this purpose it is instrumental to define the number of times that machine $n$ has been tested for safety after $t$ iterations of Algorithm $\ref{alg:assured_explorer}$, that is 
\begin{equation}T_n(t)=\sum_{\tau=1}^t \mathds{1}(I_\tau=n).
\label{eq:tnt}\end{equation} 

Next, we present a lemma that links the definitions of  $\handicap_t$ with $T_n(t)$, and will be useful to bound the expected handicap for Algorithm \ref{alg:assured_explorer} and that in   Section \ref{sec:relaxed}.

\begin{lemma}
 $\mathbb{E}[{\emph{Handicap}_t}]=\sum_{n=1}^M\mathbb{E}[T_n(t)]$\label{lemma:handicap_tnt}
\end{lemma}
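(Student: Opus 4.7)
My plan is to unpack the definition of $\handicap_t$ in \eqref{eq:handicap} and show that the complementary ``non-safe'' indicator at each round precisely counts a visit to one of the unsafe machines $1,\dots,M$. The result then drops out by swapping a finite sum and invoking the definition of $T_n(t)$ in \eqref{eq:tnt}.

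The first step is to rewrite $t=\sum_{\tau=1}^t 1$ and use the partition identity $\mathds{1}\{\mu_{I_\tau}\geq\mu\}+\mathds{1}\{\mu_{I_\tau}<\mu\}=1$ to convert \eqref{eq:handicap}, after taking expectations (and using linearity), into
$$\mathbb{E}[\handicap_t]=\sum_{\tau=1}^t \mathbb{E}\left[\mathds{1}\{\mu_{I_\tau}<\mu\}\right].$$
Next I would invoke the WLOG labeling adopted at the start of Section~\ref{sec:flawless}, namely that the unsafe machines are exactly $n\in\{1,\dots,M\}$ (here with $\mu=1$), to replace the event $\{\mu_{I_\tau}<\mu\}$ by the disjoint union $\bigcup_{n=1}^M\{I_\tau=n\}$, giving
$$\mathds{1}\{\mu_{I_\tau}<\mu\}=\sum_{n=1}^M \mathds{1}\{I_\tau=n\}.$$

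Finally, interchanging the two finite sums and using the definition $T_n(t)=\sum_{\tau=1}^t\mathds{1}\{I_\tau=n\}$ from \eqref{eq:tnt}, together with linearity of expectation, yields
$$\mathbb{E}[\handicap_t]=\sum_{n=1}^M \sum_{\tau=1}^t \mathbb{E}\left[\mathds{1}\{I_\tau=n\}\right]=\sum_{n=1}^M \mathbb{E}[T_n(t)],$$
which is the claim. There is no genuine obstacle here---the lemma is essentially a bookkeeping identity expressing that ``an unsafe action happens iff one of the $M$ unsafe machines is selected''---so the only care required is to align conventions around whether the expectation is written inside or outside the sum in \eqref{eq:handicap}, and to be explicit about the labeling of unsafe indices so that the second step is exact rather than merely an inclusion.
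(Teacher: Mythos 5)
Your proposal is correct and follows essentially the same route as the paper: both convert the handicap into the expected count of unsafe selections, identify the event $\{\mu_{I_\tau}<\mu\}$ with selecting one of the machines $1,\dots,M$, and then swap the finite sums to recover $\sum_{n=1}^M\mathbb{E}[T_n(t)]$. The only difference is cosmetic---you expand the indicator as a sum of selection indicators while the paper writes the same step via $\sum_n P(I_\tau=n)$.
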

\begin{proof}
\begin{align*}
    &\mathbb{E}[\handicap_t] =  \sum_{\tau=1}^t \mathbb{E}\left[\mathds{1}\{\mu_{I_\tau}<\mu\}\right]\\
    &= \sum_{\tau=1}^t\sum_{n=1}^N P(I_\tau=n) \mathds{1}\{\mu_{I_\tau}<\mu\}
    = \sum_{\tau=1}^t\sum_{n=1}^M P(I_\tau=n)\\
    &= \sum_{n=1}^M\sum_{\tau=1}^t\mathbb{E}\left[\mathds{1}\{I_\tau=n\}\right]
    =\sum_{n=1}^M\mathbb{E}[T_n(t)]
\end{align*}
\end{proof}

Using the result in Lemma \ref{lemma:handicap_tnt} we can bound the expected handicap of Algorithm \ref{alg:assured_explorer} by bounding  $\mathbb{E}[T_n(t)]$. This is the result of the next Theorem.

\begin{theorem}
 The handicap and safety ratio of Algorithm \ref{alg:assured_explorer} satisfy
\begin{align}
\mathbb{E}[\emph{Handicap}_t]&\leq \sum_{n=1}^M \frac{1}{(1-\mu_n)}\\
    \mathbb{E}[\rho_t]&=1,
\end{align}
and the testing time of unsafe machines is bounded by
\begin{align}
\mathbb{E}[T_n(t)]&\leq\frac{1}{(1-\mu_n)}\label{eq:testing_time}
\end{align}
\label{prop:assuredRL}
\end{theorem}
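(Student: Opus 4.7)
The plan is to verify the three claims in order, with the bound on $\mathbb{E}[T_n(t)]$ as the crux. The safety ratio statement $\mathbb{E}[\rho_t]=1$ is immediate: since $\mu_n=1$ for every safe machine, any test of such a machine yields $X_{n,t}=1$ almost surely, so Algorithm \ref{alg:assured_explorer} never removes a safe machine from $\mathcal{S}_t$. With $\epsilon=0$ in this section, the numerator in the definition of $\rho_t$ equals $N-M$ for every $t$, so $\rho_t\equiv 1$ pointwise.

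For the testing time, the key observation is that Algorithm \ref{alg:assured_explorer} discards machine $n$ the instant it returns a $0$. I would formalize this with a coupling: for each unsafe $n$, introduce an i.i.d.\ Bernoulli($\mu_n$) sequence $\{Y_{n,k}\}_{k\geq 1}$, coupled so that $Y_{n,k}$ is the outcome observed on the $k$-th trial of machine $n$ under the algorithm. Letting $G_n=\min\{k:Y_{n,k}=0\}$, the algorithm stops testing machine $n$ at its $G_n$-th trial, so $T_n(t)\leq G_n$ pointwise. Since $G_n$ is geometric with success parameter $1-\mu_n$,
\[
\mathbb{E}[T_n(t)]\;\leq\;\mathbb{E}[G_n]\;=\;\frac{1}{1-\mu_n}\;\leq\;\frac{1}{(1-\mu_n)^2},
\]
the last inequality using $1-\mu_n\leq 1$. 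I would state the looser right-hand side because it matches the form that will reappear naturally in the SPRT analysis of Sections \ref{sec:sprt}--\ref{sec:relaxed}. Substituting this into Lemma \ref{lemma:handicap_tnt} yields $\mathbb{E}[\handicap_t]=\sum_{n=1}^M \mathbb{E}[T_n(t)]\leq \sum_{n=1}^M 1/(1-\mu_n)^2$, completing all three parts.

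The only conceptual subtlety, and the main obstacle to full rigor, lies in justifying the coupling: one must verify that the outcomes observed from machine $n$ at the random instants $\{\tau:I_\tau=n\}$ are i.i.d.\ Bernoulli($\mu_n$) irrespective of the adaptive selection rule that decides $I_\tau$. This is a standard strong-Markov / optional-sampling observation, since $\{X_{n,t}\}_{t\geq 1}$ is i.i.d.\ in $t$ and independent across $n$, while each event $\{I_\tau=n\}$ is measurable with respect to the prior history $\sigma(I_1,X_1,\ldots,I_{\tau-1},X_{\tau-1})$; hence the selection mechanism cannot bias the reward distribution at the successive test times. Once this is established, the pointwise bound $T_n(t)\leq G_n$ holds and the rest of the argument is mechanical.
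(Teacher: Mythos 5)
Your proof is correct, and its skeleton matches the paper's: the safety-ratio claim is dispatched by noting that a machine with $\mu_n=1$ can never be removed, and the handicap bound follows by plugging the per-machine testing-time bound into Lemma \ref{lemma:handicap_tnt}. Where you genuinely differ is the key estimate on $\mathbb{E}[T_n(t)]$. The paper bounds the probability mass function termwise, $P(T_n(t)=\tau)\leq \mu_n^{\tau-1}$ (the first $\tau-1$ trials of machine $n$ must all have returned $1$), and then sums the series $\sum_{\tau\geq 1}\tau\,\mu_n^{\tau-1}=1/(1-\mu_n)^2$; you instead dominate $T_n(t)$ pointwise by the geometric first-failure time $G_n$, obtaining $\mathbb{E}[T_n(t)]\leq \mathbb{E}[G_n]=1/(1-\mu_n)$ and only then loosening to the stated $1/(1-\mu_n)^2$. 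Your route is cleaner and in fact sharper: it reveals that the bound \eqref{eq:testing_time} is conservative by a factor $1/(1-\mu_n)$, essentially because the paper's termwise bound discards the $(1-\mu_n)$ factor of the geometric pmf (its bounding terms do not even sum to one). The adaptivity point you flag --- that the outcomes observed at the random times $\{\tau: I_\tau=n\}$ are i.i.d.\ Bernoulli($\mu_n$) despite the history-dependent selection --- is a real technical step, but it is standard and the paper's argument relies on exactly the same fact implicitly when bounding $P(T_n(t)=\tau)$, so neither argument has a gap there.
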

\begin{proof}
The result for the safety ratio is straightforward since the probability of removing a machine with $\mu_n=1$ is zero.
The  bound for the handicap follows from Lemma \ref{lemma:handicap_tnt}, together with the result for the  testing time, which is proved next 
\begin{align}
\mathbb{E}[T_n(t)]&=\sum_{\tau=1}^t\text{P}(T_n(t)=\tau)\tau=\sum_{\tau=1}^t\mu_n^{\tau-1}(1-\mu_n)\;\tau\label{eq:testing time}\\
&\leq\sum_{\tau=1}^\infty \mu_n^{\tau-1}(1-\mu_n)\;\tau = \frac{1}{1-\mu_n}
\label{eq:ExpectedT}
\end{align}
\end{proof}

\begin{remark}
The right hand side of \eqref{eq:testing_time} in Theorem \ref{prop:assuredRL} bounds the expected number of  times that an unsafe machine is tested before removing it. This  highlights one of the main ideas introduced in this paper: if we only  want to detect unsafe machines instead of estimating the exact value of $\mu_n$, then we can do it in finite time. As a consequence, the measure of handicap defined in \eqref{eq:handicap} remains bounded by a constant. Even if we deem this result as conceptually relevant, it presents the drawback that the bounds for the expected handicap and testing times are given in terms of $\mu_n$ which are unknown. In order to provide an explicit bound in terms of the design parameters of the algorithm it is convenient to relax the condition that defines a safe machine, allowing for machines with $\mu_n\geq \mu$ lowering the prescribed safety threshold to $\mu<1$. By doing so, we will retain the ability of rejecting all unsafe machines almost surely, while explicitly bounding the handicap. With this goal in mind, we present our modified SPRT in the next section.
\end{remark}

\section{Sequential probability ratio test}\label{sec:sprt}
Consider in this section the case of a single machine  with unknown mean $\mu_n$. We face the problem of deciding whether the machine is unsafe, i.e., $\mu_n\leq\mu<1$. 
For this purpose, we set the following Hypothesis test
\begin{equation}
    \begin{cases}
    H_0:&\mu_n\geq \mu+\epsilon\\
    H_1:&\mu_n\leq \mu
    \end{cases}
\end{equation}
where $\epsilon\leq 1-\mu$ is a slack parameter.
The goal of this section is to devise a sequential test, which uses data $X_t$ for $t=1,2,\ldots$ to detect if the machine is unsafe. We look for a test that detects such a machine almost surely,  and that guarantees that a machine with $\mu_n\geq \mu+\epsilon$ is kept with probability $1-\alpha$ as $t$ grows unbounded. The three values $\mu$, $\epsilon$ and $\alpha$ are design parameters. 
An overly  conservative choice, $\epsilon\simeq 0$ $\alpha\simeq 0$ pays the price of a longer detection time, as shown in Lemma \ref{lemma:detection_time} later in this section.
%
The construction of the following test and the analysis of cylinder sets are based on Wald's celebrated  SPRT \cite{wald1945}.


 \begin{figure}[t]
 \centering
\includegraphics[width=0.9\linewidth]{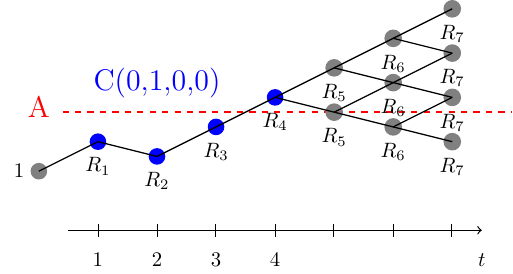}
 \caption{Schematic of the decision rule for the one-sided SPRT \eqref{eq:sprt1}. Sequences $\{x_t\}_{t=1}^\infty$  belonging to the cylinder set $C(0,1,0,0)$ coincide in  the  first $t=4$ entries $(x_1,x_2,x_3,x_4)=(0,1,0,1)$, which determine the likelihood ratios $(L_1,L_2,L_3,L_4)$ represented by blue points.  Since they cross the decision threshold $A$ at $T=4$, the  null hypothesis is rejected and the machine is declared unsafe. The decision is made at $T=4$, and therefore the multiple  possible trajectories of  $L_t$ afterwards are inconsequential and can be disregarded.}
     \label{fig:sprt_schematic}
\end{figure}

Let $f_\mu(x)$ and $f_{\mu+\epsilon}(x)$ denote the probability mass functions corresponding to the Bernoulli distributions of parameters  $\mu$ and $\mu+\epsilon$, respectively.
For $t\in\mathbb{N}$ consider a sampled trajectory $(x_1,\ldots,x_t)$ and define the likelihood ratio:
\begin{equation}
    L_t=\frac{f_\mu\left(x_1,x_2,\ldots,x_t\right)}{f_{\mu+\epsilon}\left(x_1,x_2,\ldots,x_t\right)}\label{eq:probability_ratio}
\end{equation}

At each time we calculate $L_t$ and accept $H_1$ if 
\begin{equation}
    L_t\geq A
    \label{eq:sprt1}
\end{equation}
where the threshold  $A$ is a design parameter that will be specified later. This is, if $L_t\geq A$ we declare that the machine is unsafe and stop the test.  Otherwise  we take an additional observation pulling the arm one more time to then check the condition \eqref{eq:sprt1} again after updating $t\to t+1$. {Assuming i.i.d. samples,  \eqref{eq:sprt1} can be transformed into a condition on the   number of zeros $k_t$ in the trajectory  $(x_1,x_2,\ldots,x_t)$. 
Indeed, taking the logarithm  of \eqref{eq:probability_ratio}, \eqref{eq:sprt1}  transforms into  $k_t \lambda_0 - (t-k_t) \lambda_1 \geq \log A$, with  $\lambda_0:=\log (f_\mu(0)/f_{\mu+\epsilon}(0))$ and $\lambda_1:=-\log (f_{\mu}(1)/f_{\mu+\epsilon}(1))$. Rearranging terms we arrive to  the equivalent condition for \eqref{eq:sprt1}}
\begin{equation}
    k_t\geq \frac{\log A+ \lambda_1 t}{\lambda_0+\lambda_1}
    \label{eq:binomial}
\end{equation}
with 
\begin{align}
    \lambda_0&=\log((1-\mu)/(1-\mu-\epsilon))\\
    \lambda_1&=\log((\mu+\epsilon)/\mu)
    \label{eq:definitionL0L1}
\end{align}

{In this new form, it is apparent that our decision rule reduces to a sequential binomial test. Different from Algorithm \ref{alg:assured_explorer}, \eqref{eq:binomial} does not discard a machine on the first zero, but has a probabilistic rule to decide when the number of zeros does not correspond with the hypothesis of a safe machine. }

There are three questions that we would like to answer in this setting: i) do unsafe machines produce sequences that escape the threshold $A$ with probability one?, ii) do safe machines produce sequences that do not escape this threshold, and if so, with what probability?, and  iii) what is the expected time for the probability ratio of  sequences coming from an unsafe machine  to cross the threshold $A$?
To answer these questions we need some definitions first.

\subsection{Cylinder sets}

Consider an infinite sequence $\{x_\tau\}, \tau=1,2,\ldots$ and define $C_\infty$ as the space of all such  sequences. The set $C(a_1,\ldots,a_t)$ is called a cylinder set of order $t$, and is defined as the subset of $C_\infty$ which collects sequences with  $x_1=a_1, \ldots, x_t=a_t$.
A cylinder set will be said to be of the unsafe type if 

\begin{equation}\label{eq:define_cylinder1}
L_t=\frac{f_{\mu}\left(a_{1},\ldots,a_{t}\right)}{f_{\mu+\epsilon}\left(a_{1},\ldots,a_{t}\right)} \geq A
\end{equation}
and if for all $\tau=1, \cdots, t-1$.
\begin{equation}\label{eq:define_cylinder2}
L_\tau=\frac{f_{\mu}\left(a_{1},\ldots,a_{\tau}\right)}{f_{\mu+\epsilon}\left(a_{1}\ldots, a_{\tau}\right)}<A 
\end{equation}

The first condition ensures that all infinite sequences   $\{x_\tau\}$ with $x_1=a_1,\ldots,x_t=a_t$ will lead to the acceptance of hypothesis $H_1$, thus declaring the machine  unsafe.  This is depicted in Figure \ref{fig:sprt_schematic}, showing a sequence that belongs to an unsafe cylinder of order $t=4$.  Notice that by  construction, a machine that produces a sequence belonging to an unsafe cylinder  as in \eqref{eq:define_cylinder1} will be declared unsafe  at time $t$, regardless of the  future samples $x_\tau$, $\tau>t$. Thus, we can effectively stop the test for that machine at time $t$.  The second condition \eqref{eq:define_cylinder2} ensures that cylinders of different orders are disjoint sets, since the probability ratio must exceed the threshold $A$ for the first time at  $t$ an this cannot be true for two different values of $t$.
The union of all unsafe cylinder sets (of any order) defines the set of sequences that lead to deciding $H_1$.
 Let us  name this (disjoint) union as $Q_U$. 
%
%
%
%
%
%
%
 Let us also define  $Q_S$  as the complement of $Q_U$
\begin{equation}
    Q_S = Q_U^\complement
\end{equation}

This definition means that $Q_S$ is the set of all sequences for which the likelihood ratio $L_t$ never rises above $A$. Because they are  complementary, it holds for all $\mu_n\in[0,1]$
\begin{equation}
P_{\mu_n}(Q_U+Q_S)=1 
\label{eq:pQ1S}
\end{equation}
with   $P_{\mu_n}(Q)$ being  the probability measure corresponding to a Bernoulli distribution of parameter $\mu_n$.


We would like to obtain the following behavior:
\begin{itemize}
    \item Under $H_0$, \emph{most} sequences belong to $Q_S$
    \item Under $H_1$, \emph{all} sequences belong to $Q_U$.
\end{itemize}

The second claim is guaranteed by the following lemma, which  departs from \cite{wald1945} because there is a non-zero probability of not stopping the test, and thus requires a special treatment, focusing on the mean of the estimator through the KL-divergence instead of on its variance. 

\begin{lemma}\label{lemma:unsafe_sequences_escape}
Let  $P_{\mu_n}(Q)$ and  $f_{\mu_n}(x)$ be the probability measure and mass function corresponding to a Bernoulli distribution of parameter $\mu_n$ under the alternative hypothesis $H_1$ ($\mu_n\leq \mu$) . Then, i.i.d. sequences   produced by such a distribution are correctly classified almost surely, that is
\begin{equation}
    P_{\mu_n}(Q_U)=1,\quad \forall \ \mu_n\leq \mu
    \label{eq:p1Q1}
\end{equation}
\end{lemma}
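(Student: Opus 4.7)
The plan is to apply the strong law of large numbers to $\log L_t$, showing that it diverges to $+\infty$ almost surely under $H_1$, whence $L_t$ crosses the threshold $A$ in finite time with probability one. Since the observations are i.i.d.\ Bernoulli with parameter $\mu_n$, we can write
\begin{equation*}
\log L_t \;=\; \sum_{\tau=1}^t Z_\tau, \qquad Z_\tau := \log\frac{f_\mu(x_\tau)}{f_{\mu+\epsilon}(x_\tau)},
\end{equation*}
with $Z_\tau = -\lambda_1$ when $x_\tau=1$ and $Z_\tau = \lambda_0$ when $x_\tau=0$, using the constants $\lambda_0,\lambda_1 > 0$ from \eqref{eq:definitionL0L1}. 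In particular $\{Z_\tau\}$ is i.i.d.\ and bounded.

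The key quantitative step is to show that $g(\mu_n) := E_{\mu_n}[Z_\tau] = (1-\mu_n)\lambda_0 - \mu_n \lambda_1$ is strictly positive for every $\mu_n \leq \mu$. I would argue this by observing that $g$ is linear and strictly decreasing in $\mu_n$ (slope $-\lambda_0-\lambda_1 < 0$), so it suffices to check the boundary case $\mu_n=\mu$. At that point $g(\mu)$ reduces to the Kullback--Leibler divergence $D\bigl(\mathrm{Bern}(\mu)\,\Vert\,\mathrm{Bern}(\mu+\epsilon)\bigr)$, which is strictly positive whenever $\epsilon>0$. Hence $g(\mu_n) \geq g(\mu) > 0$ uniformly in $\mu_n \leq \mu$.

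Applying the SLLN yields $\tfrac{1}{t}\log L_t \to g(\mu_n) > 0$ almost surely, so $\log L_t \to +\infty$ and thus $L_t \geq A$ for some finite $t$ with $P_{\mu_n}$-probability one. It remains to identify this event with $Q_U$: any trajectory that ever crosses $A$ has a well-defined first-crossing index $t^\star < \infty$, and by construction $(x_1,\ldots,x_{t^\star})$ satisfies both \eqref{eq:define_cylinder1} and \eqref{eq:define_cylinder2}, so the trajectory lies in the unsafe cylinder of order $t^\star$. Since the cylinders indexed by different orders are disjoint and their union is $Q_U$, this gives $P_{\mu_n}(Q_U) = P_{\mu_n}(\exists\,t:L_t\geq A) = 1$.

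The asymptotics itself is essentially free from the SLLN; the main subtlety is the positivity argument for $g(\mu_n)$ on the whole range $\mu_n\leq\mu$ (handled by the linearity/KL reduction above) and the bookkeeping needed to recognize that the disjoint union of unsafe cylinders exhausts the ``eventually crosses $A$'' event. Note that we do not need to show that $t^\star$ is integrable here, only that it is a.s.\ finite; the expected detection time is deferred to Lemma~\ref{lemma:detection_time}.
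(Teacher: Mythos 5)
Your proposal is correct and follows essentially the same route as the paper's proof: apply the law of large numbers to the log-likelihood ratio, use monotonicity in $\mu_n$ to reduce positivity of the drift to the boundary case $\mu_n=\mu$, where it equals $D_{KL}(f_\mu\mid\mid f_{\mu+\epsilon})>0$, and conclude that $\Lambda_t\to\infty$ a.s.\ so the threshold $\log A$ is crossed in finite time. Your explicit bookkeeping identifying the first-crossing event with the disjoint union of unsafe cylinders is a slightly more careful rendering of a step the paper states briefly, but it is not a different argument.
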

\begin{proof}
For a given sequence $(x_1,\ldots,x_t)$ define the log-likelihood ratio as
\begin{equation}
\Lambda_t = \log L_t=\log\prod_{i=1}^t\frac{f_\mu(x_i)}{f_{\mu+\epsilon}(x_i)}=\sum_{i=1}^t\log\frac{f_\mu(x_i)}{f_{\mu+\epsilon}(x_i)}
\label{eq:log_likelihood}
\end{equation}

Dividing by $t$:
$$
\frac{\Lambda_t}{t}=\frac{1}{t}\sum_{i=1}^t\log\frac{f_\mu(x_i)}{f_{\mu+\epsilon}(x_i)}
$$

Taking the limit as $t\rightarrow\infty$ the above expression converges to the expectation of the right hand side under the alternative hypothesis
\begin{align}
\frac{\Lambda_t}{t}&\rightarrow \mathbb{E}_{x\sim f_{\mu_n}}\left[\log\frac{f_\mu(x)}{f_{\mu+\epsilon}(x)}\right]\nonumber\\
&=\mu_n \log\frac{\mu}{\mu+\epsilon}+(1-\mu_n)  \log\frac{1-\mu}{1-\mu-\epsilon}\nonumber\\
&=\mu_n \left(\log\frac{\mu}{\mu+\epsilon}-\log\frac{1-\mu}{1-\mu-\epsilon}\right)+\log\frac{1-\mu}{1-\mu-\epsilon}  \nonumber\\
&\geq\mu \left(\log\frac{\mu}{\mu+\epsilon}-\log\frac{1-\mu}{1-\mu-\epsilon}\right)+\log\frac{1-\mu}{1-\mu-\epsilon}  \nonumber\\
&= \mu \log\frac{\mu}{\mu+\epsilon}+(1-\mu)  \log\frac{1-\mu}{1-(\mu+\epsilon)}\nonumber\\
&=D_{KL}(f_\mu\mid\mid f_{\mu+\epsilon})>0\label{eq:DKL} 
\end{align}
where  $D_{KL}$ stands for the Kullback-Leibler divergence. The inequality holds because {$\mu_n\leq\mu$} by hypothesis, and it multiplies the expression in brackets, which is negative.  
%
%
From the  inequality in \eqref{eq:DKL} {and the Law of Large Numbers,} it follows
$$
\lim_{t\rightarrow\infty}\Lambda_t = \infty,\quad a.s.
$$
Therefore there must exist a positive integer $t$ for which $\Lambda_t$ exceeds $\log A$, so that the sequence $\{x_\tau\}$ belongs to an  unsafe cylinder of order $t$ and thus $\{x_\tau\}\in Q_U$.
\end{proof}

The previous Lemma proved that  unsafe machines are detected with probability one. Next we prove that, by designing the threshold $A$ judiciously, a fraction $1-\alpha$ of the safe machines are kept in the system indefinitely. Later, in Lemma \ref{lemma:detection_time}, we provide a bound on the expected time it takes to detect an unsafe machine.

\begin{lemma}\label{lemma:safe_error_probability}
Let $A=\frac{1}{\alpha}$ and $\mu_n\geq \mu+\epsilon$. Then,  the probability that a trajectory never rises above $A$ is $P_{\mu_n}(Q_S)\geq 1-\alpha$.
\end{lemma}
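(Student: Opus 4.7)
The strategy is a one-sided adaptation of Wald's classical SPRT bound. I would first establish the result at the boundary $\mu_n=\mu+\epsilon$ by directly comparing likelihoods on unsafe cylinders, and then extend it to every $\mu_n\geq \mu+\epsilon$ by a monotonicity argument based on the rewritten test \eqref{eq:binomial}.

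First, fix an unsafe cylinder $C(a_1,\ldots,a_t)$. By its defining condition \eqref{eq:define_cylinder1}, $f_\mu(a_1,\ldots,a_t)\geq A\cdot f_{\mu+\epsilon}(a_1,\ldots,a_t)$, so
\begin{equation*}
P_{\mu+\epsilon}\bigl(C(a_1,\ldots,a_t)\bigr)\leq \tfrac{1}{A}\,P_{\mu}\bigl(C(a_1,\ldots,a_t)\bigr).
\end{equation*}
Condition \eqref{eq:define_cylinder2} makes the collection of unsafe cylinders (across all orders $t$) pairwise disjoint, and $Q_U$ is their countable union. Summing the pointwise inequality above and using $P_\mu(Q_U)\leq 1$ together with the choice $A=1/\alpha$ yields
\begin{equation*}
P_{\mu+\epsilon}(Q_U)=\sum_{t,(a_1,\ldots,a_t)\text{ unsafe}} P_{\mu+\epsilon}\bigl(C(a_1,\ldots,a_t)\bigr)\leq \tfrac{1}{A}\,P_{\mu}(Q_U)\leq \alpha.
\end{equation*}
This already gives $P_{\mu+\epsilon}(Q_S)\geq 1-\alpha$, settling the boundary case.

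Next, I would extend to arbitrary $\mu_n\geq \mu+\epsilon$. The equivalent decision rule \eqref{eq:binomial} shows that $Q_U$ is characterized entirely in terms of the running counts of zeros $k_t$: a sequence is in $Q_U$ iff at some finite $t$ the count $k_t$ crosses an affine-in-$t$ threshold. This event is monotone decreasing in each coordinate $x_i$ (flipping a $0$ to a $1$ can only remove a sequence from $Q_U$, never add it). For Bernoulli measures on $\{0,1\}^\infty$, the parameter $\mu_n$ induces stochastic dominance: for $\mu_n\geq \mu+\epsilon$, $P_{\mu_n}$ stochastically dominates $P_{\mu+\epsilon}$ coordinatewise. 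Combining monotonicity of $\mathds{1}_{Q_U}$ with stochastic dominance (or equivalently constructing a coupling on a common probability space) gives $P_{\mu_n}(Q_U)\leq P_{\mu+\epsilon}(Q_U)\leq \alpha$, and hence $P_{\mu_n}(Q_S)\geq 1-\alpha$ as desired.

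The main technical delicacy is the step from the boundary case to the full range $\mu_n\geq \mu+\epsilon$: the pointwise likelihood-ratio bound used for $\mu+\epsilon$ does not immediately transfer to other values of $\mu_n$, so some form of coupling or monotone-likelihood-ratio argument is needed. Using the equivalent form \eqref{eq:binomial} of the test makes this transparent, since $Q_U$ is then a monotone event in the observations. Everything else reduces to countable additivity over disjoint cylinders, which is the standard Wald bookkeeping.
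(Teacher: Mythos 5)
Your proof is correct, and while its first half coincides with the paper's argument, its second half takes a genuinely different route. For the boundary case $\mu_n=\mu+\epsilon$ you use exactly the paper's Wald-style bookkeeping: decompose $Q_U$ into disjoint unsafe cylinders, apply \eqref{eq:define_cylinder1} cylinder by cylinder, marginalize over future coordinates, and sum; one small difference is that you only need $P_\mu(Q_U)\leq 1$, whereas the paper invokes $P_\mu(Q_U)=1$ from Lemma~\ref{lemma:unsafe_sequences_escape}, so your version of this step is slightly more self-contained. For the extension to all $\mu_n\geq\mu+\epsilon$, the paper decomposes $P_{\mu_n}(Q_U)$ over unsafe cylinders again and shows that each cylinder probability $f_{\mu_n}(a_1,\ldots,a_t)=\mu_n^{t-k_t}(1-\mu_n)^{k_t}$ is nonincreasing in $\mu_n$, which requires the auxiliary estimate that unsafe cylinders satisfy $1-k_t/t\leq\mu+\epsilon$, proved via \eqref{eq:binomial} and the logarithm bounds leading to \eqref{eq:failed_pulls}, followed by a derivative computation. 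You instead note that, through \eqref{eq:binomial}, $Q_U$ is a monotone event in the observations (flipping a $0$ to a $1$ can only leave $Q_U$) and use stochastic dominance, e.g.\ the coupling $X_i=\mathds{1}\{U_i\leq\mu_n\}\geq Y_i=\mathds{1}\{U_i\leq\mu+\epsilon\}$, under which the zero counts satisfy $k_t(X)\leq k_t(Y)$ and hence $\{X\in Q_U\}\subseteq\{Y\in Q_U\}$, giving $P_{\mu_n}(Q_U)\leq P_{\mu+\epsilon}(Q_U)\leq\alpha$ directly. Your coupling argument avoids the derivative and logarithm estimates entirely and is more robust (it applies to any stopping rule monotone in the number of zeros); the paper's pointwise argument stays within the cylinder decomposition already set up and yields, as a by-product, the explicit fact that unsafe cylinders contain a fraction of zeros of at least $1-\mu-\epsilon$. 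Both routes reach $P_{\mu_n}(Q_S)\geq 1-\alpha$.
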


\begin{proof}

%
First, we prove the claim $P_{\mu+\epsilon}(Q_S)\geq 1-\alpha$   for the limiting case $\mu_n=\mu+\epsilon$, and then we generalize it for $\mu_n\geq \mu+\epsilon$.
For $\mu_n=\mu+\epsilon$, the core of the proof relies in showing that 
 \begin{equation}
 L_t\geq A \Longrightarrow P_\mu(Q_U) \geq A P_{\mu+\epsilon} (Q_U) \label{eq:pQ1}
 \end{equation}

 Once we prove \eqref{eq:pQ1}, we use the fact that $P_\mu(Q_U)=1$ (see  \eqref{eq:p1Q1}), hence
 \begin{equation}
 P_{\mu+\epsilon}(Q_U)\leq \frac{1}{A} = \alpha \Rightarrow P_{\mu+\epsilon}(Q_S) \geq 1- \alpha 
 \label{eq:P0Qs}    
 \end{equation}
 as desired. 



 To prove \eqref{eq:pQ1}, we start by decomposing $Q_U$ as the union across time $t$ of the union of all unsafe cylinders of order $t$, that is  $$
 Q_U = \bigcup_{t=1}^\infty\bigcup_{(a_1,\ldots,a_t) \in \mathcal A_t} C(a_1,\ldots,a_t)
 $$
 where $\mathcal A_t$ collects the tuples  $(a_1,\ldots,a_t)$ that define  unsafe cylinders of order $t$, i.e., those  satisfying \eqref{eq:define_cylinder1} and \eqref{eq:define_cylinder2}.  
 By construction all the cylinder sets are disjoint, hence 
 \begin{align*}
 P_\mu(Q_U)&=\sum_{t=1}^\infty\sum_{(a_1,\ldots,a_t) \in \mathcal A_t} P_\mu\left(C(a_1,\ldots,a_t)\right)\\
 &=\sum_{t=1}^\infty\sum_{(a_1,\ldots,a_t) \in \mathcal A_t} f_\mu(a_1,\ldots,a_t)\\
 &\geq \sum_{t=1}^\infty\sum_{(a_1,\ldots,a_t) \in \mathcal A_t} A f_{\mu+\epsilon}(a_1,\ldots,a_t)\\
 &=A\sum_{t=1}^\infty\sum_{(a_1,\ldots,a_t) \in \mathcal A_t} P_{\mu+\epsilon}(C(a_1,\ldots,a_t))\\
 &=AP_{\mu+\epsilon}(Q_U)
 \end{align*}
 where second identity follows from marginalizing over future trajectories (see Fig. \ref{fig:sprt_schematic}), and the inequality holds since $\mathcal A_t$ is defined to satisfy \eqref{eq:define_cylinder1}.
 
 Now that we have \eqref{eq:pQ1},  \eqref{eq:P0Qs} follows immediately, and we need to prove that $P_{\mu_n}(Q_S)\geq P_{\mu+\epsilon}(Q_S)$,  or equivalently $P_{\mu_n}(Q_U)\leq P_{\mu+\epsilon}(Q_U)$, when $\mu_n\geq \mu+\epsilon$.   
 This is intuitively true, since  for a sequence to belong to $Q_U$, it must satisfy \eqref{eq:binomial}. But the number of zeros is distributed  $k_t\sim \text{Binomial}(t, 1-\mu_n)$, so that the  probability of  satisfying \eqref{eq:binomial} becomes lower as   $\mu_n$ increases. For a more rigorous proof, we decompose again 
  \begin{equation}
     P_{\mu_n}(Q_U)=\sum_{t=1}^\infty\sum_{(a_1,\ldots,a_t) \in \mathcal A_t} f_{\mu_n}(a_1,\ldots,a_t)\label{eq:PmunQu}
 \end{equation}
 as we did for $P_{\mu}(Q_U)$. We will prove that for any fixed tuple $(a_1,\ldots,a_t) \in\mathcal A_t$, $f_{\mu_n}(a_1,\ldots,a_t)$ is a decreasing function of $\mu_n$. First, we need to prove that the number of zeros $k_t$ in $(a_1,\ldots,a_t)$ satisfies $1-k_t/t\leq \mu_n$.  But because $k_t$ in  $(a_1,\ldots,a_t)$ must satisfy \eqref{eq:binomial}, and $\log A$, $\lambda_0$, and $\lambda_1$ are strictly positive, then $k_t\geq \lambda_1 t /(\lambda_0+\lambda_1)$ or equivalently $k_t/t\geq\left(1+\frac{\lambda_0}{\lambda_1}\right)^{-1}$. From the definition of $\lambda_0$ and $\lambda_1$ it yields
 \begin{align}
     \frac{k_t}{t}&\geq\left(1+\frac{\lambda_0}{\lambda_1}\right)^{-1}=\left(1+\frac{\log[(1-\mu)/(1-\mu-\epsilon)]}{{\log[(\mu+\epsilon)/\mu]}}\right)^{-1}\nonumber\\
     &\geq \left(1 + \frac{(1-\mu)/(1-\mu-\epsilon)-1}{1-\mu/(\mu+\epsilon)}\right)^{-1} \label{eq:log_ineq}\\
     &=\left(\frac{1}{1-\mu-\epsilon}\right)^{-1}= 1-\mu-\epsilon \geq 1-\mu_n
 \end{align}
 where the inequality in \eqref{eq:log_ineq} follows from the usual bounds of the logarithm $1-1/x\leq \log(x)\leq x-1$. Rearranging $1-\frac{k_t}{t}\leq \mu_n$, it results $t-k_t-t\mu_n\leq 0$. In this case, the derivative of $f_{\mu_n}(a_1,\ldots,a_t)$ takes the form
 \begin{align}
    &\frac{d}{d{\mu_n}} f_{\mu_n}(a_1,\ldots,a_t)=\frac{d}{d\mu_n}\mu_n^{t-k_t}(1-\mu_n)^{k_t}\nonumber\\
    &=\mu_n^{t-k_t-1}(1-\mu_n)^{k_t-1}(t-k_t-t \mu_n)\leq 0\label{eq:derivada}
 \end{align}
 Putting  \eqref{eq:pQ1S}, \eqref{eq:P0Qs}, \eqref{eq:PmunQu}, and \eqref{eq:derivada} together results in $P_{\mu_n}(Q_S)\geq 1-\alpha$ for all $\mu_n\geq\mu+\epsilon$. 
  \end{proof}



We have answered two of the three questions about our one-sided SPRT. Once we know that all unsafe machines are detected with probability one, it remains to characterize the detection time, which is the purpose of the following lemma. Henceforth, we will set the decision  threshold at $A=\frac{1}{\alpha}$. 

\begin{lemma}\label{lemma:detection_time} Under the alternative hypothesis $H_1$ corresponding to $\mu_n\leq \mu$, and with $A=\frac{1}{\alpha}$, the test \eqref{eq:sprt1} is expected to terminate after $T$ steps, with \begin{equation}
    \mathbb{E}[T]\leq 1+\frac{\log\left(1/\alpha\right)}{D_{KL}(f_\mu \mid\mid f_{\mu+\epsilon})}
\end{equation}
\end{lemma}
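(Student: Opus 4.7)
The plan is to prove this via Wald's identity applied to the log-likelihood random walk under $H_1$, together with the positive-drift bound already obtained inside the proof of Lemma \ref{lemma:unsafe_sequences_escape}.

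\emph{Setup.} Writing $Y_i := \log\bigl(f_\mu(x_i)/f_{\mu+\epsilon}(x_i)\bigr)$, the log-likelihood process $\Lambda_t = \sum_{i=1}^t Y_i$ is a random walk with i.i.d. increments taking values in $\{-\lambda_1,\lambda_0\}$. Under $H_1$ (so $\mu_n\le\mu$), the calculation inside the proof of Lemma \ref{lemma:unsafe_sequences_escape} already yields $\mathbb{E}[Y_1]\ge D_{KL}(f_\mu\|f_{\mu+\epsilon})>0$. The stopping rule recasts as $T=\inf\{t\ge 1:\Lambda_t\ge\log A\}$, and Lemma \ref{lemma:unsafe_sequences_escape} gives $T<\infty$ a.s. A standard positive-drift / bounded-increments argument (e.g. a geometric tail bound on $P(T>t)$ coming from Hoeffding) promotes this to $\mathbb{E}[T]<\infty$, which I need to justify optional stopping.

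\emph{Step 1 (Wald's identity).} Since $\mathbb{E}[T]<\infty$ and the $Y_i$ are bounded, apply the optional stopping theorem to the martingale $M_t=\Lambda_t - t\,\mathbb{E}[Y_1]$ at $T$ to obtain
\[
\mathbb{E}[\Lambda_T] \;=\; \mathbb{E}[T]\,\mathbb{E}[Y_1].
\]

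\emph{Step 2 (overshoot control).} By definition of $T$, with $\Lambda_0 = 0$, one has $\Lambda_{T-1}<\log A$, so $\Lambda_T = \Lambda_{T-1} + Y_T < \log A + Y_T$. Because an upward crossing requires a positive increment, $Y_T=\lambda_0$ a.s. Taking expectations, combining with Step 1, and using $\mathbb{E}[Y_1]\ge D_{KL}(f_\mu\|f_{\mu+\epsilon})$,
\[
\mathbb{E}[T] \;=\; \frac{\mathbb{E}[\Lambda_T]}{\mathbb{E}[Y_1]} \;\le\; 1 + \frac{\log A}{\mathbb{E}[Y_1]} \;\le\; 1 + \frac{\log(1/\alpha)}{D_{KL}(f_\mu\|f_{\mu+\epsilon})},
\]
provided the expected overshoot $\mathbb{E}[\Lambda_T]-\log A$ can be charged to one expected increment $\mathbb{E}[Y_1]$ rather than to the worst-case jump $\lambda_0$.

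\emph{Main obstacle.} The delicate step is controlling the overshoot so that the extra term on the right-hand side is exactly $1$ rather than $\lambda_0/D_{KL}$. A crude bound $\mathbb{E}[\Lambda_T]\le\log A+\lambda_0$ only gives the weaker estimate $(\log A+\lambda_0)/D_{KL}$. Closing this gap requires a more careful renewal-style accounting of the overshoot, for instance by peeling off the first sample and applying the same inequality recursively to the conditional walk starting from $Y_1$, so that one expected increment is absorbed into the drift $\mathbb{E}[Y_1]$ and yields the clean ``$+1$'' in the stated bound.
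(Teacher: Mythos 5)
Your plan is the same route the paper takes: Wald's identity $\mathbb{E}[\Lambda_T]=\mathbb{E}[T]\,\mathbb{E}[Y_1]$ (the paper's \eqref{eq:nbounded_1}, with $L_n=\mathbb{E}[Y_1]$ and $L_n\ge D_{KL}(f_\mu\mid\mid f_{\mu+\epsilon})$ from \eqref{eq:DKL}), followed by peeling off the final increment. The difference is exactly the step you flag as your ``main obstacle'' and leave open: the paper closes it by writing $\mathbb{E}[\Lambda_T]=\mathbb{E}[\Lambda_{T-1}]+L_n\le\log(1/\alpha)+L_n$ (its \eqref{eq:nbounded_2}), i.e.\ it charges the terminal increment at its \emph{unconditional} mean $L_n$, which is what produces the clean ``$+1$''. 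Since you do not reproduce or replace that step, your proposal as written only establishes the weaker bound $\mathbb{E}[T]\le\bigl(\log(1/\alpha)+\lambda_0\bigr)/D_{KL}(f_\mu\mid\mid f_{\mu+\epsilon})$; the ``renewal-style recursion'' you gesture at is not carried out, so the stated inequality is not proved. That is a genuine gap in the proposal.

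That said, your caution is well founded, and it is worth being explicit about why: as you note, the likelihood ratio can only cross $A$ on a zero, so on $\{T=t\}$ the terminal increment equals $\lambda_0$ almost surely, and $\lambda_0>L_n$; the paper's equality $\mathbb{E}[\Lambda_T]=\mathbb{E}[\Lambda_{T-1}]+L_n$ treats $x_T$ as an ordinary draw independent of the stopping event, which is the classical Wald ``neglect the overshoot'' approximation rather than a rigorous inequality. Indeed the clean ``$+1$'' can fail when the overshoot dominates the threshold: for $\mu=0.5$, $\epsilon=0.25$, $\mu_n=\mu$ and $\alpha=0.9$ one has $\log(1/\alpha)\approx0.105$, $L_n=D_{KL}\approx0.144$, $\lambda_0\approx0.693$, so the claimed bound is $\approx1.73$, while conditioning on the first sample and applying Wald's identity to the remaining walk gives $\mathbb{E}[T]\ge 0.5+0.5\bigl(1+0.511/0.144\bigr)\approx2.78$. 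So the obstacle you identified is not a technicality you failed to dispatch: it is precisely the unjustified step in the paper's own argument, and a fully rigorous version of this approach yields the overshoot-corrected constant $\bigl(\log(1/\alpha)+\lambda_0\bigr)/D_{KL}$ (or requires a genuine overshoot/renewal bound, e.g.\ of Lorden type) rather than the stated $1+\log(1/\alpha)/D_{KL}$, at least without restricting the parameter regime.
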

\begin{proof} 

Let $T$ be the smallest integer for which the test leads to the acceptance of $H_1$. Such variable is well defined and finite as a result of Lemma \ref{lemma:unsafe_sequences_escape}.

\begin{align}
    \mathbb{E}_{x\sim f_{\mu_n}}\left[\Lambda_T\right] 
    &= \mathbb{E}_T\left[\mathbb{E}_{x\sim f_{\mu_n}}\left[\sum_{i=1}^T \log\left.\frac{f_\mu(x_i)}{f_{\mu+\epsilon}(x_i)}~\right|~ T\right]\right]\nonumber\\
    &= \mathbb{E}_T\left[T \mathbb{E}_{x\sim f_{\mu_n}}\left[ \log\frac{f_\mu(x)}{f_{\mu+\epsilon}(x)}~\right]\right]\nonumber\\
    &= \mathbb{E}[T] \mathbb{E}_{x\sim f_{\mu_n}}\left[ \log\frac{f_\mu(x)}{f_{\mu+\epsilon}(x)}\right]
    =\mathbb{E}[T] R_n \label{eq:nbounded_1} 
\end{align}
with $R_n=\mathbb{E}_{x\sim f_{\mu_n}}\left[ \log\frac{f_\mu(x)}{f_{\mu+\epsilon}(x)}\right]$.
%
%
%
%
Furthermore,

\begin{align}
\mathbb{E}_{x\sim f_{\mu_n}}\left[\Lambda_T\right] &= \mathbb{E}_{x\sim f_{\mu_n}}\left[\Lambda_{T-1}+\log\frac{f_\mu(x_T)}{f_{\mu+\epsilon}(x_T)}\right]\nonumber\\
&= \mathbb{E}_{x\sim f_{\mu_n}}\left[\Lambda_{T-1}\right]+R_n\leq \log(1/\alpha) + R_n \label{eq:nbounded_2}
\end{align}


Combining \eqref{eq:nbounded_1} and \eqref{eq:nbounded_2}:

$$
\mathbb{E}[T]\leq  1+\frac{\log\left(1/\alpha\right)}{R_n}\leq  1+\frac{\log\left(1/\alpha\right)}{D_{KL}(f_\mu \mid\mid f_{\mu+\epsilon})}
$$

in virtue of $R_n \geq D_{KL}(f_\mu \mid\mid f_{\mu+\epsilon})$ for $\mu_n\leq \mu$, as it was proved in \eqref{eq:DKL}. 
\end{proof}

The result in Lemma \ref{lemma:detection_time} evidences the need of some slack $\epsilon$ between the limiting distributions of both hypotheses. By accommodating this gap we are able to separate  the limiting distributions $f_{\mu}(x)$ from $f_{\mu+\epsilon}(x)$ so that the distance $D_{KL}(f_\mu \mid\mid f_{\mu+\epsilon})$ is positive and we can guarantee a finite expected detection time. Notice that we could use   $R_n$ for the bound on the detection time, as given in  the proof of Lemma \ref{lemma:detection_time}, which indeed  gives a tighter bound, meaning  faster detection. However, it requires the knowledge of the underlying probability $\mu_n$ which is unknown. Using the $D_{KL}$ instead is preferred, because it yields a  bound that depends on our design parameters $\mu$, $\epsilon$ and $\alpha$ only. This result also allows  us to have  some intuitive interpretation regarding the choice of these design parameters.

With lemmas \ref{lemma:unsafe_sequences_escape}--\,\ref{lemma:detection_time} at hand, we return to our original  setup.

\section{Learning to be safe}\label{sec:relaxed}
Next we generalize  Algorithm \ref{alg:assured_explorer} for the case in which the safety requirement $\mu=1$ is relaxed. The following algorithm results from extending the one-sided SPRT just described to the scenario with multiple machines. Identical to the previous Section, we prescribe a safety threshold $\mu<1$ that renders machines with $\mu_n<\mu$ as unsafe. Then, we define an error probability $\alpha$, a slack parameter $\epsilon$, and the Bernoulli probability mass functions    $f_{\mu+\epsilon}(x)$ and $f_\mu(x)$ with means $\mu+\epsilon$ and $\mu$ respectively. These are all the definitions needed to run  our second Safety Inspector  algorithm.

\begin{center}
\begin{algorithm}
Initialize $\mathcal{S}_1=\left\{1,\ldots,N\right\}$,\ $\Lambda_n=0,\ n=1,\ldots,N$\\
\For{$t=1,2\ldots$}{
Pick an arm $I_t \sim$ Unif($\mathcal{S}_t$)\\
Observe return $X_t=X_{I_t,t}$\\
Update $\Lambda_{I_t}+=\log\frac{f_\mu(X_t)}{f_{\mu+\epsilon}(X_t)}$\\

\If{$\Lambda_{I_t}\geq \log(1/\alpha)$}{$\mathcal{S}_t \leftarrow \mathcal{S}_{t-1}\setminus \{I_t\}$
}

}
\caption{Relaxed Safety Inspector}
\label{alg:relaxed_assured_explorer}
\end{algorithm}
\end{center}

Building on lemmas \ref{lemma:handicap_tnt}--\ref{lemma:detection_time} we state our main result.

\begin{theorem}
The  handicap and safety ratio of the Relaxed Safety Inspector (Algorithm \ref{alg:relaxed_assured_explorer}) satisfy
\begin{align}\mathbb{E}[\handicap_t]&\leq M\left(1+\frac{\log\left(1/\alpha\right)}{D_{KL}(f_\mu \mid\mid f_{\mu+\epsilon})}\right)\label{eq:expected_handicap_bound}\\
    \mathbb{E}[\rho_t]&\geq 1-\alpha \label{eq:rho_bound}
\end{align}
and the testing time of unsafe machines is bounded by 
\begin{align}
\mathbb{E}[T_n(t)]&\leq 1+\frac{\log\left(1/\alpha\right)}{D_{KL}(f_\mu \mid\mid f_{\mu+\epsilon})} \label{eq:tnt_relaxed}
\end{align}
\label{prop:relaxed_assuredRL}

\end{theorem}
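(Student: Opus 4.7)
The plan is to observe that Algorithm \ref{alg:relaxed_assured_explorer} effectively runs an independent copy of the one-sided SPRT from Section \ref{sec:sprt} on each machine. Specifically, the statistic $\Lambda_n$ maintained by the algorithm is exactly $\sum_{\tau:\, I_\tau=n}\log\bigl(f_\mu(X_{n,\tau})/f_{\mu+\epsilon}(X_{n,\tau})\bigr)$, and the discard condition $\Lambda_{I_t}\geq\log(1/\alpha)$ is the SPRT rule \eqref{eq:sprt1} with threshold $A=1/\alpha$. Since $\{X_{n,t}\}_t$ is i.i.d. Bernoulli$(\mu_n)$ and is independent across machines, the subsequence of samples seen at machine $n$ remains i.i.d. Bernoulli$(\mu_n)$ regardless of how $I_t$ is chosen from past history. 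Hence Lemmas \ref{lemma:unsafe_sequences_escape}--\ref{lemma:detection_time} apply to each machine individually, and the theorem is assembled from these per-machine guarantees.

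With this reduction, \eqref{eq:tnt_relaxed} follows directly. For an unsafe machine ($\mu_n\leq\mu$) let $T^{(n)}$ be the SPRT stopping time, counted in pulls of arm $n$. Lemma \ref{lemma:unsafe_sequences_escape} gives $T^{(n)}<\infty$ a.s., and Lemma \ref{lemma:detection_time} gives $\mathbb{E}[T^{(n)}]\leq 1+\log(1/\alpha)/D_{KL}(f_\mu\mid\mid f_{\mu+\epsilon})$. Because machine $n$ leaves $\mathcal S_t$ on its $T^{(n)}$-th pull and is never selected again, $T_n(t)\leq T^{(n)}$ for every $t$, which yields \eqref{eq:tnt_relaxed}. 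The handicap bound \eqref{eq:expected_handicap_bound} is then immediate from Lemma \ref{lemma:handicap_tnt}: summing $\mathbb{E}[T_n(t)]$ over the $M$ unsafe machines multiplies the single-machine bound by $M$.

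For the safety ratio \eqref{eq:rho_bound}, I will argue that a safe machine $n$ with $\mu_n\geq\mu+\epsilon$ is retained in $\mathcal S_t$ whenever its log-likelihood trajectory has not yet crossed $\log(1/\alpha)$, so the event $\{n\in\mathcal S_t\}$ contains the event $Q_S$ that the trajectory never crosses at all. By Lemma \ref{lemma:safe_error_probability}, $P_{\mu_n}(Q_S)\geq 1-\alpha$, hence $P(n\in\mathcal S_t)\geq 1-\alpha$ for every safe arm. Taking the expectation of \eqref{eq:safety_ratio} and dividing by $N-M$ delivers the bound.

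The main subtlety is verifying rigorously that conditioning on the data-dependent selection history $(I_1,\ldots,I_t)$ does not distort the distribution of the sub-sampled Bernoulli trajectory at each machine, so that the single-machine guarantees of Section \ref{sec:sprt} transfer verbatim. Once this measurability step (essentially, independence of the machines plus the fact that $I_\tau$ is $\sigma(X_1,\ldots,X_{\tau-1},I_1,\ldots,I_{\tau-1})$-measurable) is in place, the remainder of the proof is a mechanical combination of Lemmas \ref{lemma:handicap_tnt}, \ref{lemma:safe_error_probability}, and \ref{lemma:detection_time}.
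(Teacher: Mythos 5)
Your proposal is correct and follows essentially the same route as the paper's own proof: treat each machine's statistic $\Lambda_n$ as a separate one-sided SPRT (so Lemma~\ref{lemma:detection_time} gives \eqref{eq:tnt_relaxed}), sum over the $M$ unsafe machines via Lemma~\ref{lemma:handicap_tnt} for \eqref{eq:expected_handicap_bound}, and invoke Lemma~\ref{lemma:safe_error_probability} for the safety ratio; you merely spell out the i.i.d.\ subsampling/measurability step and the containment $\{n\in\mathcal S_t\}\supseteq Q_S$ that the paper leaves implicit. Note only that what your argument (and the paper's own proof) actually establishes is $\mathbb{E}[\rho_t]\geq 1-\alpha$, the inequality direction in \eqref{eq:rho_bound} being a typo.
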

\begin{proof}
The third inequality  was proved in Lemma \ref{lemma:detection_time}. Notice that $T_n(t)$ is defined as the number of trials for machine $n$, regardless of the time spent on other machines,   so that we can treat it as the result of separate SPRTs, and thus use Lemma \ref{lemma:detection_time}.
The first inequality follows from \eqref{eq:tnt_relaxed} and Lemma \ref{lemma:handicap_tnt}. The second one results  from   Lemma \ref{lemma:safe_error_probability}. 
\end{proof}
Theorem \ref{prop:relaxed_assuredRL} certifies that the Relaxed Safety Inspector (Algorithm \ref{alg:relaxed_assured_explorer}) inherits the finite detection time from the SPRT, ensuring that all unsafe machines are removed in finite time, providing a universal bound  \eqref{eq:tnt_relaxed} in terms of the design parameters $\alpha, \mu$ and $\epsilon$. As a consequence, the total handicap of the system also remains bounded by a finite constant \eqref{eq:expected_handicap_bound}. Together with the certainty of rejecting all  unsafe machines, our algorithm ensures that a proportion $1-\alpha$ of the safe machines with slack $\epsilon$ is kept in the system indefinitely according to \eqref{eq:rho_bound}.    








\section{Numerical examples}\label{sec:numerical}

Let us first illustrate the behavior of Algorithm \ref{alg:relaxed_assured_explorer} and how it proceeds to discard unsafe arms. To that end, consider a simple setup of $N=3$ arms, all with same  parameter $\mu_n=0.8$. Our safety requirements are set to $\mu=0.9$, $\epsilon=0.02$ and the error probability to $\alpha=0.05$. With this parameters in mind, all machines should be deemed unsafe in finite time. Our discarding rule involves checking when $\Lambda_t\geq\log A$. Since we are dealing with Bernoulli random variables, this rule can be equivalently cast as a decision based on the number of  failed outcomes of each arm \eqref{eq:binomial}. These reciprocal ideas are depicted on the test in Figure \ref{fig:sprt}.

\begin{figure}[h]
    \centering
    \includegraphics[width=.9\linewidth]{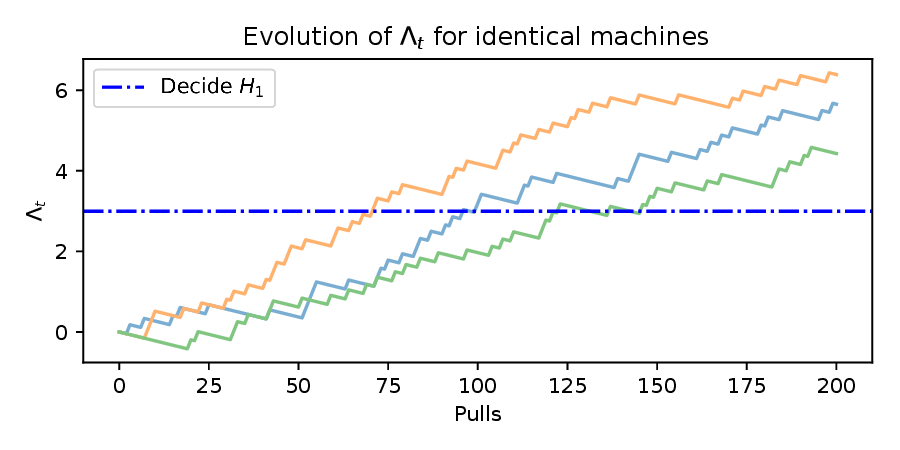}
    \includegraphics[width=.9\linewidth]{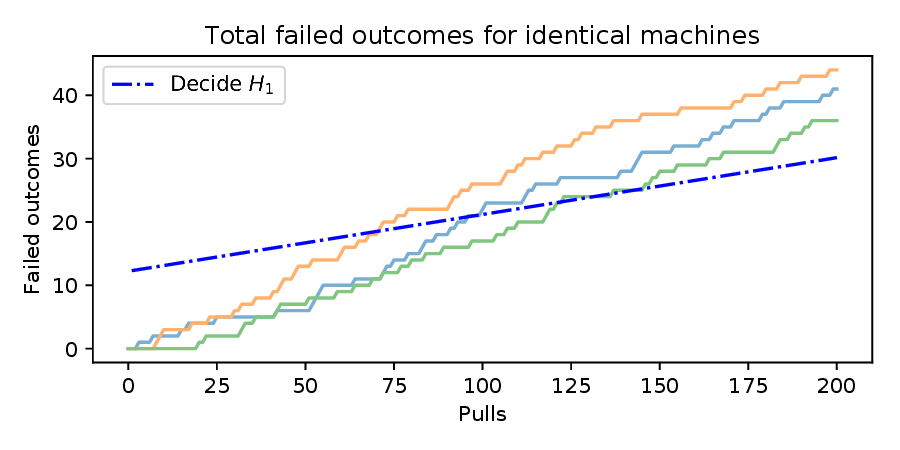}
    \caption{Sequential probability ratio test for $\mu=0.9$, $\epsilon=0.02$, $\alpha=0.05$ and three identical unsafe machines of parameter $\mu_n=0.8$. Above: log-likelihood function $\Lambda_t$ as a function of the number of pulls of each machine. Below: failed outcomes for each machine as a function of the number of pulls. The test terminates when the sequence surpasses either rejection line (in blue). All three unsafe machines are discarded in finite time.}
    \label{fig:sprt}
\end{figure}

\subsection{Experiment 1: Transient behavior}

We consider a setup of $N=1000$ arms, and set a safety guarantee $\mu = 0.9$ and a gap $\epsilon = 0.05$. The true parameter of each arm is sampled from a uniform distribution $\mathcal{U}\left(0.8,1\right)$. We run sixteen instances of the Safety Inspector described in Algorithm \ref{alg:relaxed_assured_explorer} on this test-bed, and then average the results obtained.
We define the \emph{normalized handicap} as the average handicap over all the available arms: $\text{NHandicap}_t=\frac{1}{N}\text{Handicap}_t$.
Figures \ref{fig:handicap_t} and \ref{fig:rho_t} show the evolution of the normalized handicap and safety ratio for different tolerance levels $\alpha$, along with the bounds obtained in Theorem  \ref{prop:relaxed_assuredRL}. Notice that both the handicap and safety ratio remain constant after some time, which indicates that all unsafe machines have been identified, and that no more safe machines are discarded along the way.
The final handicap obtained is essentially the number of pulls over all unsafe arms. This is depicted more closely in Figure \ref{fig:hist}, which presents a histogram of the \emph{testing time} on unsafe machines, for the setup explained above and for fixed $\alpha=0.1$. Most machines yield a testing time that is strictly lower than the bound in \eqref{eq:ExpectedT}. It is important to remark that this bound is on the \emph{expected} time, and therefore a small number of machines actually need to be tested for longer. Nevertheless, the empirical mean represented by a dashed green line in Fig. \ref{fig:hist} lies below the red line that represents the bound. This bound, as well as those in Figs. \ref{fig:handicap_t} and \ref{fig:rho_t}, is loose because the machines parameters $\mu_n$ were drawn uniformly  from $\mathcal{U}\left(0.8,1\right)$, but becomes tight if selected as the limiting parameters  of the hypothesis test $\mu_n\in\{\mu,\mu+\epsilon\}$.
\begin{figure}[h]
    \centering
    \includegraphics[width=.9\linewidth]{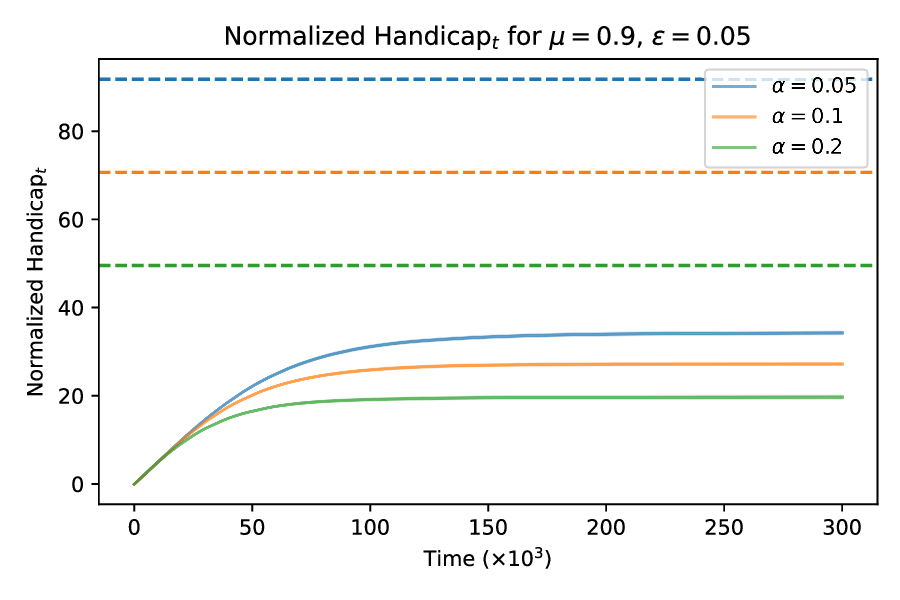}
    \caption{Evolution of the Normalized Handicap through training for $\mu=0.9$, $\epsilon=0.05$ and machines with parameter drawn from $\mathcal{U}(0.8,1)$. Each solid line corresponds to the handicap obtained with a different error tolerance $\alpha$, and the dashed lines are the (normalized) bounds on the Handicap (see \eqref{eq:expected_handicap_bound}). All unsafe machines are eventually discarded after sufficient training, and therefore the handicap remains constant.}
    \label{fig:handicap_t}
\end{figure}

\begin{figure}[h]
\centering
    \includegraphics[width=.9\linewidth]{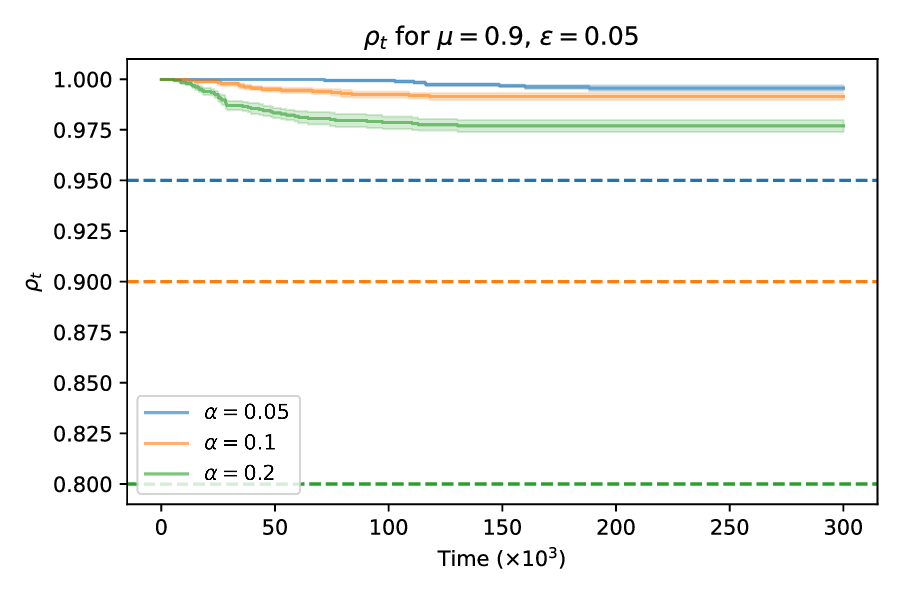}
    \caption{Evolution of the safety ratio $\rho_t$ through training for $\mu=0.9$, $\epsilon=0.05$ and machines with parameter drawn from $\mathcal{U}(0.8,1)$, for different tolerance levels $\alpha$. Each solid line is accompanied by its corresponding bound (see \eqref{eq:rho_bound}).}
    \label{fig:rho_t}
\end{figure}

\begin{figure}[h]
\centering
    \includegraphics[width=.9\linewidth]{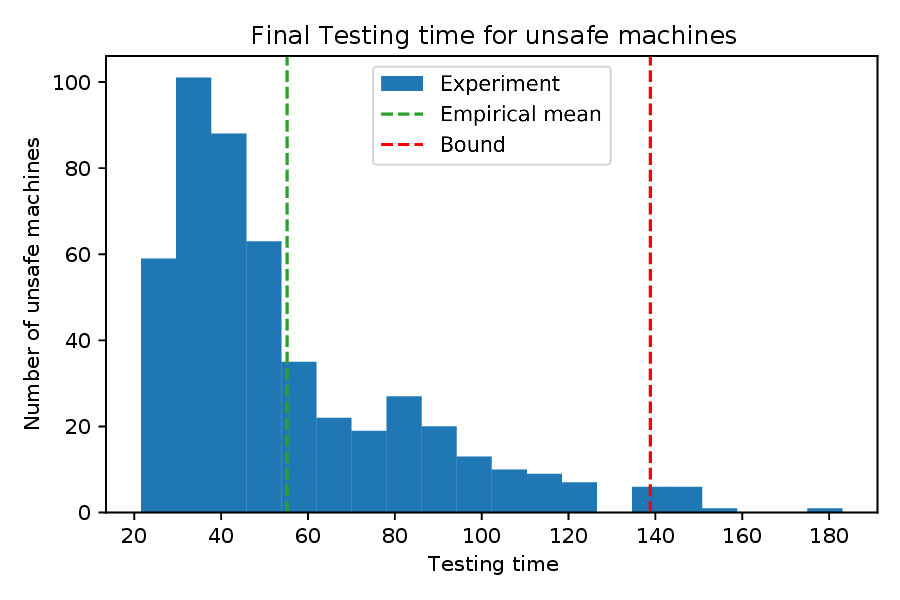}
    \caption{Histogram of the testing time needed to discard unsafe machines, for $\mu=0.9$, $\epsilon=0.05$, $\alpha=0.1$ and machines' parameters drawn from $\mathcal{U}(0.8, 1)$. In dashed red: bound on the testing time from Theorem \ref{prop:relaxed_assuredRL}. Since this bound is on the expected testing time, some machines need to be tested for longer. In dashed green: Empirical mean testing time, which is strictly lower than the bound described.}
    \label{fig:hist}
\end{figure}

\subsection{Experiment 2: Steady state behavior}
Repeating the same setup as in \emph{Experiment 1}, now we perform multiple runs for varying $\epsilon$.
Let us define $\text{NHandicap}_\infty$ as the maximal normalized handicap obtained ---that is, the normalized handicap when all unsafe machines are discarded. Figure \ref{fig:stoptime_handicap} illustrates the dependence of $\text{NHandicap}_\infty$ for varying $\epsilon$ and $\alpha$. Larger values of $\alpha$ and $\epsilon$ attain lower handicap, which essentially means that unsafe arms are detected faster. This, however, comes at a price ---faster detection necessarily implies discarding safe machines along the way. Figure \ref{fig:rho} shows the other side of the coin: the final value of the safety ratio $\rho_t$ after all unsafe machines have been discarded, which we dub $\rho_\infty$. As $\epsilon$ and $\alpha$ grow, $\rho_\infty$ diminishes. The conjunction of these two Figures exemplify the preservation-exploration trade-off inherent to our Algorithm.

\begin{figure}[h]
    \centering
  \includegraphics[width=.9\linewidth]{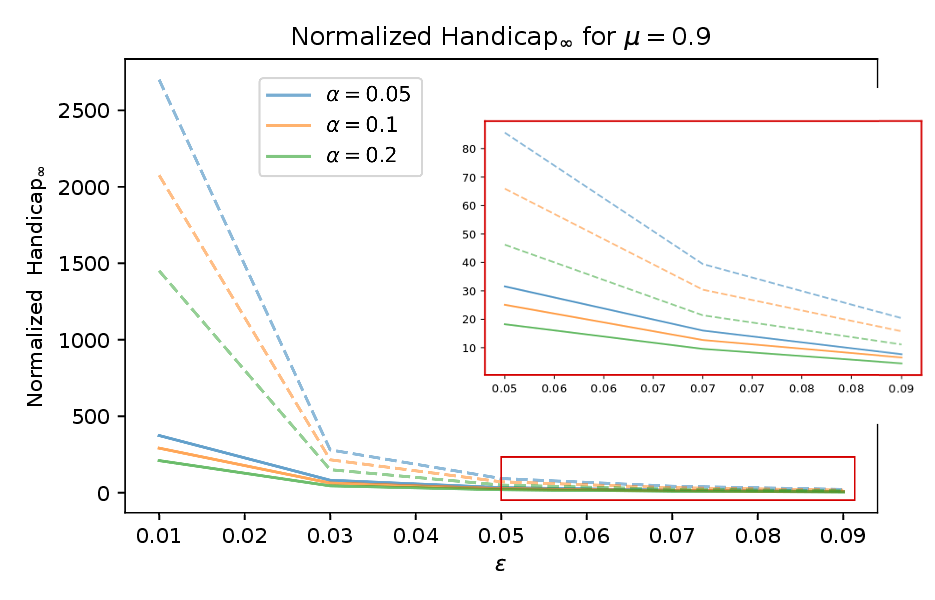}
  \caption{Normalized final Handicap for $\mu=0.9$ as a function of $\epsilon$, for varying tolerance level $\alpha$. As is to be expected, larger values of $\alpha$ and $\epsilon$ achieve lower handicap (which implies faster detection).}
  \label{fig:stoptime_handicap} 
\end{figure}

\begin{figure}[h]
    \centering
    \includegraphics[width=.9\linewidth]{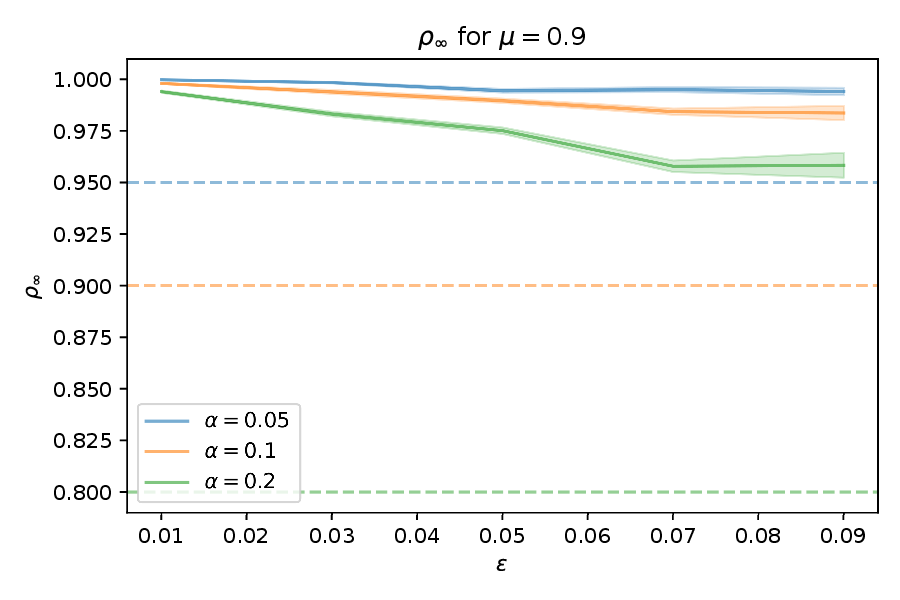}
    \caption{Final safety ratio $\rho_{\infty}$ for $\mu=0.9$ as a function of $\epsilon$, for varying $\alpha$. More machines are kept when using small values of $\epsilon$ and $\alpha$, but this in turn implies longer training time (c.f. Figure \ref{fig:stoptime_handicap})} 
    \label{fig:rho}
\end{figure}

\section{Conclusions}\label{sec:conclusions}
In this paper we are interested in providing a safety environment for learning. To that end, we advance the idea that detecting if an action is safe is much simpler than trying to estimate its value function, and can be done in finite time.  In this direction, we define a measure of handicap that complements the {notion of regret by accounting for the aggregate number of unsafe actions explored. } We focus on the multi-armed bandit problem, with the goal of detecting the malfunctioning machines and keeping the handicap bounded. For this purpose, we  introduced the Relaxed Safety Inspector (Algorithm \ref{alg:relaxed_assured_explorer}), which we developed as  sequential probability ratio test for parallel hypotheses. We proved in Theorem \ref{prop:relaxed_assuredRL} that  this algorithm has the property of removing all unsafe machines in finite time, providing a universal bound  \eqref{eq:tnt_relaxed} in terms of the  the design parameters $\alpha, \mu$ and $\epsilon$. As a consequence of this,  the  handicap  remains bounded by  a finite constant as time goes to infinity. The price to pay for being able to detect all malfunctioning machines in finite time  is to accommodate a slack on the machines that are considered safe, and losing a proportion of them. Interestingly, these are design parameters that can be tightened if we are willing to wait longer for detection.   

\section{Acknowledgements}
The authors thank Hancheng Min and Yue Shen for their comments and valuable suggestions.





\bibliography{refs}
\bibliographystyle{ieeetr}

\end{document}